\Crefname{figure}{Figure}{Figures}
\crefname{figure}{Figure}{Figures}
\crefname{example}{Example}{Example}
\crefname{theorem}{Theorem}{Theorem}
\crefname{corollary}{Corollary}{Corollary}
\crefname{lemma}{Lemma}{Lemma}
\crefname{proposition}{Proposition}{Proposition}
\crefname{assumption}{Assumption}{Assumption}
\crefname{section}{Section}{Section}
\crefname{algorithm}{Algorithm}{Algorithm}
\declaretheorem[name=Theorem,numberwithin=section]{theorem}
\declaretheorem[name=Definition,style=definition]{definition}
\declaretheorem[name=Proposition,numberlike=theorem]{proposition}
\declaretheorem[name=Remark,style=definition,numberwithin=section]{remark}
\def\eqref#1{equation~\ref{#1}}
\def\1{\bm{1}}
\DeclareMathAlphabet{\mathsfit}{\encodingdefault}{\sfdefault}{m}{sl}
\SetMathAlphabet{\mathsfit}{bold}{\encodingdefault}{\sfdefault}{bx}{n}
\title{CL4KGE: A Curriculum Learning Method for Knowledge Graph Embedding}
\author{%
  Yang Liu, Chuan Zhou \\
  Academy of Mathematics and Systems Science\\
  Chinese Academy of Sciences\\
  Beijing, China \\
  \texttt{  \{liuyang2020, zhouchuan\}@amss.ac.cn } \\
  \And
  Peng Zhang \\
  Cyberspace Institute of Advanced Technology\\
  Guangzhou University \\
  Guangzhou, China \\
  \texttt{p.zhang@gzhu.edu.cn} \\
  \And
  Yanan Cao \\
  Institute of Information Engineering \\
  Chinese Academy of Sciences\\
  Beijing, China \\
  \texttt{caoyanan@iie.ac.cn} \\
  \And
  Yongchao Liu \\
  Ant Group \\
  Hangzhou, China \\
  \texttt{yongchao.ly@antgroup.com} \\
  \And
  Zhao Li, Hongyang Chen \\
  Research Center for Graph Computing\\
  Zhejiang Lab \\
  Hangzhou, China \\
  \texttt{lzjoey@gmail.com, dr.h.chen@ieee.org} \\
}
\begin{document}

\maketitle

\begin{abstract}
Knowledge graph embedding (KGE) constitutes a foundational task, directed towards learning representations for entities and relations within knowledge graphs (KGs), with the objective of crafting representations comprehensive enough to approximate the logical and symbolic interconnections among entities. In this paper, we define a metric Z-counts to measure the difficulty of training each triple ($<$head entity, relation, tail entity$>$) in KGs with theoretical analysis. Based on this metric, we propose \textbf{CL4KGE}, an efficient \textbf{C}urriculum \textbf{L}earning based training strategy for \textbf{KGE}. This method includes a difficulty measurer and a training scheduler that aids in the training of KGE models. Our approach possesses the flexibility to act as a plugin within a wide range of KGE models, with the added advantage of adaptability to the majority of KGs in existence. The proposed method has been evaluated on popular KGE models, and the results demonstrate that it enhances the state-of-the-art methods. The use of Z-counts as a metric has enabled the identification of challenging triples in KGs, which helps in devising effective training strategies.
\end{abstract}



\section{Introduction}\label{sec: introduction}

Knowledge Graph Embedding (KGE) is a powerful technique for modeling and comprehending complex knowledge graphs. KGE methods aim to learn low-dimensional representations of entities (nodes) and their relationships (edges) in a knowledge graph. The goal is to encode the structural and semantic information present in the graph into the low-dimensional representations, which are then used for downstream applications such as reasoning \citep{fang2023mln4kb}, recommendation system \citep{zhang2016collaborative,wang2018dkn,ma2019jointly,wang2019kgat,lin2023towards,zhang2024meta}, question answering \citep{huang2019knowledge,lukovnikov2017neural}, and querying \citep{chen2022fuzzy,lin2019guiding}.
{A significant number of contemporary KGE models, as encapsulated by the prevailing paradigm \citep{kamigaito2022comprehensive}, frequently overlook the heterogeneity in training difficulties encountered among triplets, as well as the potential richness of information that is embedded within them. This oversight may limit the depth of learning and the effectiveness of the resulting embeddings in capturing complex relationships.} This observation has sparked our interest in delving into the curriculum learning dilemma within KGE.

Curriculum learning (CL) is a training strategy proposed by Bengio \citep{bengio2009curriculum} for learning tasks with difficult training samples, which is to rank a sequence of samples from easy learning to difficult learning. Since the easy learning samples carry more useful information, curriculum learning enables task learning more efficiently and effectively. However, from the viewpoint of KEG \citep{bordes2013translating, sun2019rotate}, the training samples are triplets that are not independent but closely related together. An effective way is to rank these training triplets from easy to difficult. The easy-training triplets refer to the triplets with recognizable feature representations. It is easier for KGE models to identify a precise decision boundary from easy triples. On the contrary, the difficult triplets refer to the ones without recognizable characteristics. This kind of triplet potentially confuses the model convergence and prevents a KGE model from successfully training a good model. Consequently, crafting a metric capable of gauging the difficulty of samples within a knowledge graph, particularly one abundant in intricate topology and semantic framework, presents a formidable challenge.

To address this challenge, this paper proposes a new \textbf{C}urriculum \textbf{L}earning method catered for \textbf{K}nowledge \textbf{G}raph \textbf{E}mbedding (\textbf{CL4KGE}). {In particular, we introduce a metric named Z-counts to quantify and evaluate the difficulty of triplets. This metric enables the ranking of training triplets, thereby guiding the training process to progress from simpler to more complex triplets. Furthermore, we propose a curriculum learning strategy that consists of two main components. The first component involves a difficulty assessment based on Z-counts, while the second component features a training scheduler that leverages the structure of the knowledge graph. Our empirical analysis includes extensive experiments on various benchmark datasets, and the findings highlight the superior performance of our proposed method, showcasing a significant improvement over existing approaches.}

The main contributions of this work are summarized as follows:


\begin{itemize}
    \item {\bf Difficulty Metric: Z-counts.}\; We propose a new metric \emph{Z-counts} to measure the training-triplet difficulty in a knowledge graph. We theoretically analyze the metric and explain how to solve the training difficulty based on the \emph{Z-counts} (see \Cref{def: Z}). 
    \item {\bf Curriculum Learning: CL4KGE.}\; We design a curriculum learning method \textbf{CL4KGE} (see \Cref{alg:clkg}) for KGE based on \emph{Z-counts}. We demonstrate that the framework can scale well with the increasing number of relations in a knowledge graph. By ranking training triplets in a difficulty-ascending manner, our method can improve   KGE methods without increasing time complexity.
    \item {\bf Experimental Evaluation.}\; We demonstrate the superiority of the proposed method through extensive experiments on tasks including link prediction and triple classification on various datasets. Experimental results show that CL4KGE used as a plugin successfully enhances the popular KGE models and achieves state-of-the-art results.
\end{itemize}


\section{Related Works}\label{sec: related works}

In this section, we summarize the previous works about knowledge graph embedding and curriculum learning. To the best of our knowledge, this is the first attempt to design a curriculum learning framework for knowledge graph embedding.

\subsection{Knowledge Graph Embedding.} Knowledge Graph Embedding (KGE) is a powerful tool for modeling and understanding complex knowledge graphs. Typical models include TransE \citep{bordes2013translating},  TorusE \citep{ebisu2018toruse},  RotatE \citep{sun2019rotate},  ConvE \citep{dettmers2018convolutional}, MQuadE \citep{yu2021mquade}, RESCALE \citep{nickel2011three}, DistMult \citep{yang2014embedding}, and ComplEx \citep{trouillon2016complex}.  KGE methods aim to learn low-dimensional representations of entities (nodes) and relations (edges) in a knowledge graph. These embeddings can then be used for downstream applications {such as link prediction} \citep{bordes2013translating, yu2021mquade, fang2023mln4kb, shengyuan2024differentiable}, knowledge graph completion \citep{bordes2013translating} and entity alignment \citep{chen2024entity}. Link prediction is a fundamental task in knowledge graph and it has a wide range of applications including healthcare \citep{almansoori2012link,almansoori2011link}, education \citep{liu2022locality}, and social analysis \citep{ferreira2016enhancing}. Several approaches have been proposed to learn knowledge graph embeddings and link predictions, including methods that rely on matrix factorization \citep{yu2021mquade}, graph neural networks \citep{2020Composition, lin2022structure, lin2024graph, lin2021disentangled, zhu2024rhgnn, liu2024combinatorial, liu2024decision}, and probabilistic modeling \citep{pujara2013knowledge}. There are also some methods that combine multiple approaches \citep{chen2020review} to improve the accuracy of the models. 

\begin{table}[!t]
    \small
    \centering
    \begin{tabular}{l l}
     \toprule
        Notation & Description  
     \\ \midrule
        $\mathcal{H}$ & knowledge graph  \\
        $\mathcal{E}$ & entity set  \\
        $\mathcal{R}$ & relation set \\
        $e,e_1,e_2,\cdots$ & entity \\
        $V,V_1,V_2,\cdots$ & node \\
        $r,r_1,r_2,\cdots$ & relation \\
        $h, h_1, h_2,\cdots$ & head entity \\
        $t, t_1, t_2,\cdots$ & tail entity \\
        $\mathbf{T}_{\text{train}}, \mathbf{T}_{\text{valid}}, \mathbf{T}_{\text{test}}$ & training, validation, test set  \\
     $\mathcal{C}$  &  training criteria \\
     $Q_1, \cdots, Q_t, \cdots, Q_T$ & training samples in each epoch $t$ \\
     $g(\cdot)$ & pacing function\\
     $p_0$ & initial percentage\\
     \bottomrule
    \end{tabular}
    \smallskip
    \caption{ Notations and their corresponding descriptions. } \label{table:notation}
    \end{table}



\subsection{Curriculum Learning} Curriculum learning \citep{bengio2009curriculum} is a powerful technique for training machine learning models, especially in complex tasks. Providing a sequence of tasks to a model allows it to learn more efficiently and effectively. Knowledge Graphs provide a structured way to represent knowledge, easily accessible to machines. This makes them especially well suited for use with curriculum learning. Curriculum learning enhances generalization ability and directs the model toward a better parameter space, according to previous works \citep{bengio2009curriculum,weinshall2020theory} in various domains including Graph \citep{wang2021curgraph,wei2022clnode,liu2023decision}, NLP \citep{xu2020curriculum,cirik2016visualizing}, CV \citep{zhang2021flexmatch,almeida2020low}, Medical \citep{burduja2021unsupervised,liu2022competence,alsharid2020curriculum}, Speech \citep{ristea2021self,wang2020curriculum,zhang2019automatic}, even Robotics \citep{florensa2017reverse,milano2021automated,manela2022curriculum}. To the best of our knowledge, no work has yet attempted to apply curriculum learning to link prediction in knowledge graph.

\begin{figure*}[t]
    \centering
    \includegraphics[width=0.8\linewidth]{./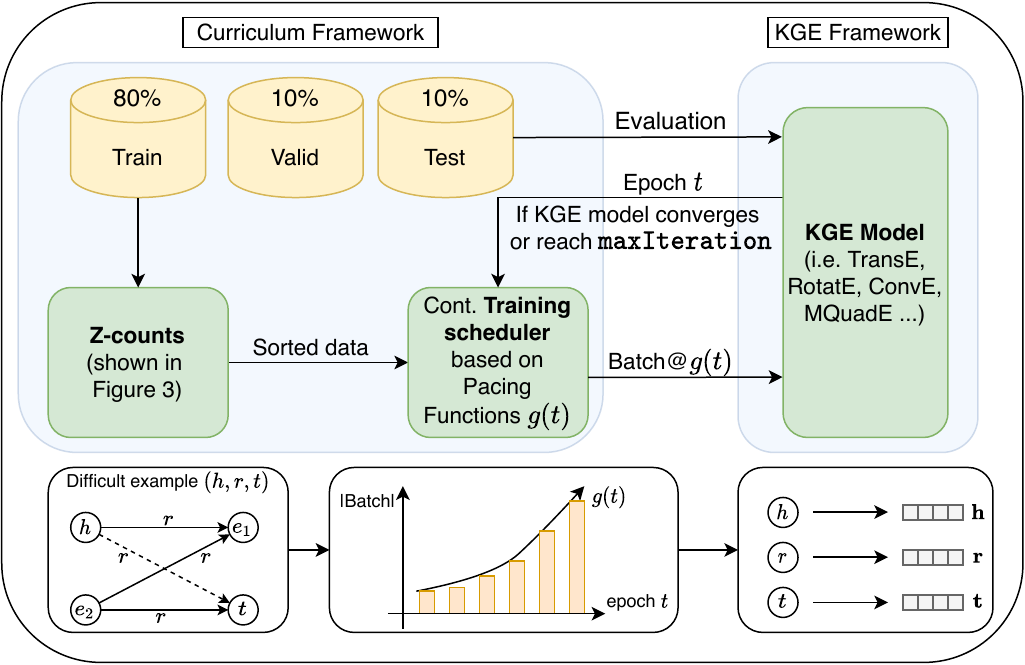}
    \caption{An overview of CL4KGE framework which contains two parts: a curriculum learning framework and KGE methods. The three small diagrams below are an illustration of the corresponding three modules. The Z-counts based curriculum framework leads to a more effective training strategy. At the bottom left is the Z-shaped phenomenon which means if $h \rightarrow e_1, e_2 \rightarrow e_1, e_2 \rightarrow t$ holds, $h \rightarrow t$ is likely to be true. We design a metric \emph{Z-counts} according to the Z-shaped phenomenon.}
    \label{fig: Cl-KGX}
\end{figure*}

\section{Problem Setup and Background}\label{sec: background}


In this section, we state the problem formulation and provide a succinct theoretical basis for the problem we study, including some preliminaries of knowledge graph and mathematics statement of curriculum learning. Additionally, we go through the notations involved in this work in \Cref{table:notation}. We denote scalars, vectors and matrices with lowercase letters, bold lowercase letters, and bold uppercase letters, respectively.

\subsection{Knowledge Graph Embedding}
{A knowledge graph can be described as a set of triplets (head entity, relation, tail entity) denoted as $\{(h,r,t) \mid h, t \in \mathcal{E}, r \in \mathcal{R}\}$.} As for training strategy, KGE methods would introduce a score function \citep{socher2013reasoning} defined in \Cref{def:score} for each triple, and the loss function is composed for the model update.

{
\begin{definition}[Score Function]\label{def:score}
    For each triplet $(h, r, t)$ in a knowledge graph $\mathcal{H}$, the score function $\mathbf{f}: \mathbb{R}^d \times \mathbb{R}^d \times \mathbb{R}^d \to \mathbb{R}^+ \cup \{0\}$ is a non-negative real-valued function. We define it as:
    \begin{equation}
    \mathbf{f}(\mathbf{h}, \mathbf{r}, \mathbf{t}) = \mathsf{r}(\mathbf{h}, \mathbf{t}) 
    \longrightarrow
     \left\{
\begin{aligned} 
& 0, \quad \text{if } (h, r, t) \text{ holds}, \\
&+\infty, \quad \text{otherwise},
\end{aligned}
\right.
    \end{equation}
    where $\mathbf{h}$, $\mathbf{r}$, and $\mathbf{t}$ are the embeddings of entities $h$, $r$, and $t$, respectively, and $\mathsf{r}$ is a relation function mapping from $\mathbb{R}^d \times \mathbb{R}^d$ to $\mathbb{R}^+ \cup \{0\}$.
\end{definition}
}

Given two elements in a triplet $(h,r,t)$ with a score function $f(h,r,t)$, the task of link prediction is to add the most logical element with the highest confidence level in the remaining position. Translation distance-based approaches often describe relations using geometric transformations of entities, and they assess the plausibility of fact triples by comparing the distances between entity embedding following relation transformations. Also, each translation distance-based KG method will define a unique score function. However, deep learning-based KGE methods follow a different way to encoder the triplet which could use some popular neural networks including CNNs \citep{dettmers2018convolutional,2015RelationNguyen}, RNNs \citep{2015RelationZhang}, and GNNs \citep{2020Composition}.

\begin{figure}[t]
    \centering
    \includegraphics[width=0.8\linewidth]{./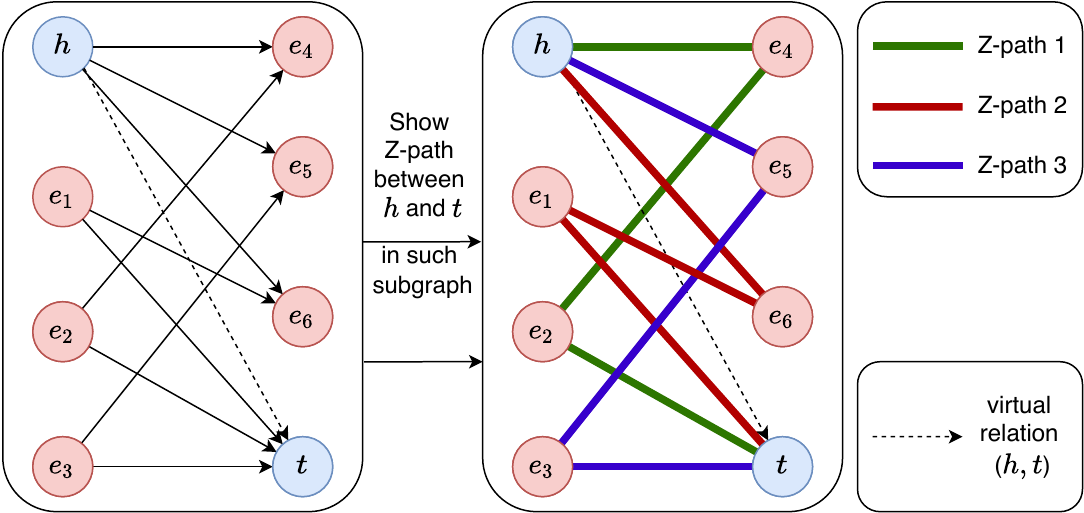}
    \caption{This illustration shows an example of Z-counts between $h$ and $t$. In this graph, there are three Z-path between $h$ and $t$ and the responding Z-counts is 3. We label different Z-paths with different colors in the right hand-side graph.}
    \label{fig: z-counts}
\end{figure}

{
\begin{remark}
    The task of link prediction involves predicting the missing elements in such triplets to infer new facts and relationships in the knowledge graph. Specifically, if either $h$ or $t$ is missing, the goal is to identify the most plausible entity that completes the triplet based on the embeddings and learned patterns in the graph. The primary motivation for link prediction is that knowledge graphs are typically incomplete; not all possible relationships between entities are known or present. Therefore, link prediction aims to discover these missing links, enhancing the knowledge graph's completeness and utility.
    For instance:
    \begin{itemize}
        \item Predicting the tail entity $t$ given $(h, r, ?)$
        \item Predicting the head entity $h$ given $(?, r, t)$
    \end{itemize}
    The plausibility of a predicted link is usually quantified using a scoring function that assesses how likely a given triplet $(h, r, t)$ is to be true within the learned embedding space of the knowledge graph.
\end{remark}
}

\subsection{Curriculum Learning}
Curriculum learning (CL) is a training strategy that mimics the suitable learning sequence seen in human curricula by training a machine learning model from easy to difficult data. By utilizing a curriculum to train the model, curriculum learning reduces the negative effects of challenging samples. A curriculum sequence contains a series of training criteria $<Q_1, \cdots, Q_t, \cdots, Q_T>$ over $T$ epochs. Each training criteria $Q_t$ is a subset of the training set. With the increase of training epoch, the number of hard samples in $Q_t$ is gradually increased. We design the curriculum learning framework in two parts, difficulty measure and training scheduler, which will be introduced in details in \Cref{sec: method} and \Cref{sec: training scheduler} respectively. In our framework, the Z-counts ranks the difficulty of each triplet in the training epochs, and the training scheduler generates $Q_i$ for each batch to train the model. The overview of our proposed framework can be found in \Cref{fig: Cl-KGX}.

Before we introduce the details of our proposed curriculum learning for knowledge graph embedding, we decide to introduce the original CL framework. For a more rigorous presentation and easy understanding, we give the original definition of CL proposed by Bengio \citep{bengio2009curriculum}.

\begin{definition}[Curriculum Learning \citep{bengio2009curriculum}]\label{def: cl}
    A curriculum is a sequence of training criteria over T training steps: $\mathcal{C} = <Q_1, \cdots, Q_t, \cdots, Q_T>$ and each criterion $Q_t$ is a re-weighting of the target training distribution $P(z)$:
    \begin{equation*}
        Q_t \propto W_t(z)P(z) \quad \forall \text{ sample } z \in \mathbf{T}_{\text{training}}
    \end{equation*}
    Also, these should satisfy the following three conditions:
    \begin{itemize}
        \item The entropy of distributions gradually increases, $H(Q_t)<H(Q_{t+1})$,
        \item The weight for any example increases, $W_t(z)\leq W_{t+1}(z) \quad \forall \text{ sample } z \in \mathbf{T}_{\text{training}}$,
        \item $Q_T(z) = P(z)$.
    \end{itemize}
\end{definition}

\section{Method: CL4KGE}\label{section:CL4KGE}

This section provides a comprehensive exposition of our proposed framework CL4KGE, which is intricately grounded in curriculum learning principles. It comprises two integral components: a difficulty measurer (refer to \Cref{sec: method}) and a training scheduler (refer to \Cref{sec: training scheduler}).

\subsection{Difficulty measurer: Z-counts}\label{sec: method}


In this section, we introduce the metric \textbf{Z-counts} used to measure the quality of the triplets, whose intuitive idea is shown in \Cref{fig: z-counts}, before we present the details of the proposed CL4KGE framework. Our strategy's main idea is to enhance the performance of the backbone KGE methods by gradually incorporating the training samples into the learning process as it progresses from easy to difficult. There are always some Z-shaped pathways for each pair of $(h,t)$ to all of the relation $r$, regardless of whether it is really present, in every knowledge graph that consists of triple facts $\{h_i,r_i,t_i\}_{i=1}^N$. To be more rigorous, we give the definition of \textbf{Z-counts} below:

\begin{figure*}[t]
    \centering
    \mbox{
        \includegraphics[width=0.3\linewidth,trim={0 50pt 0 10pt}]{./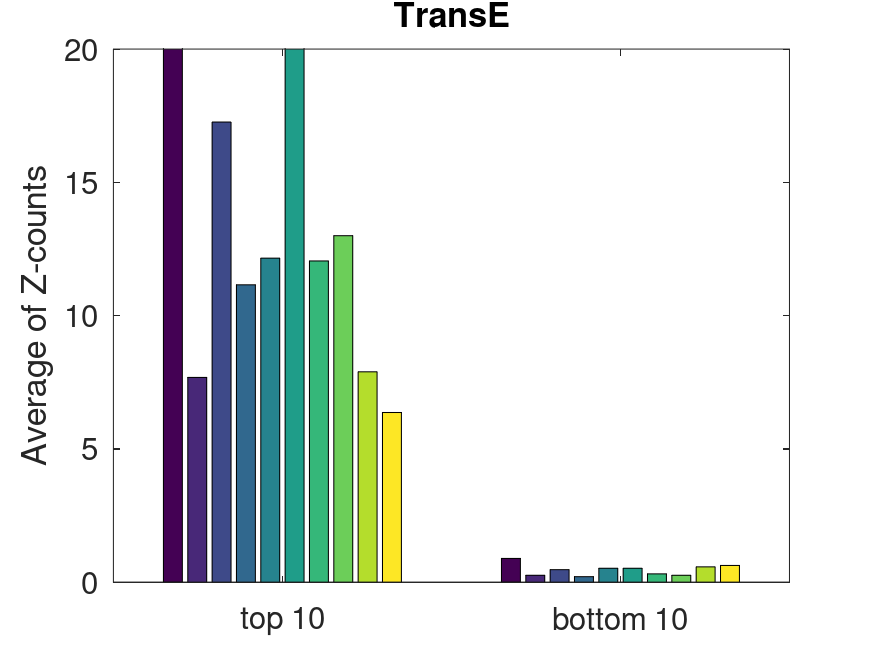}
        \includegraphics[width=0.3\linewidth, trim={0 50pt 0 10pt}]{./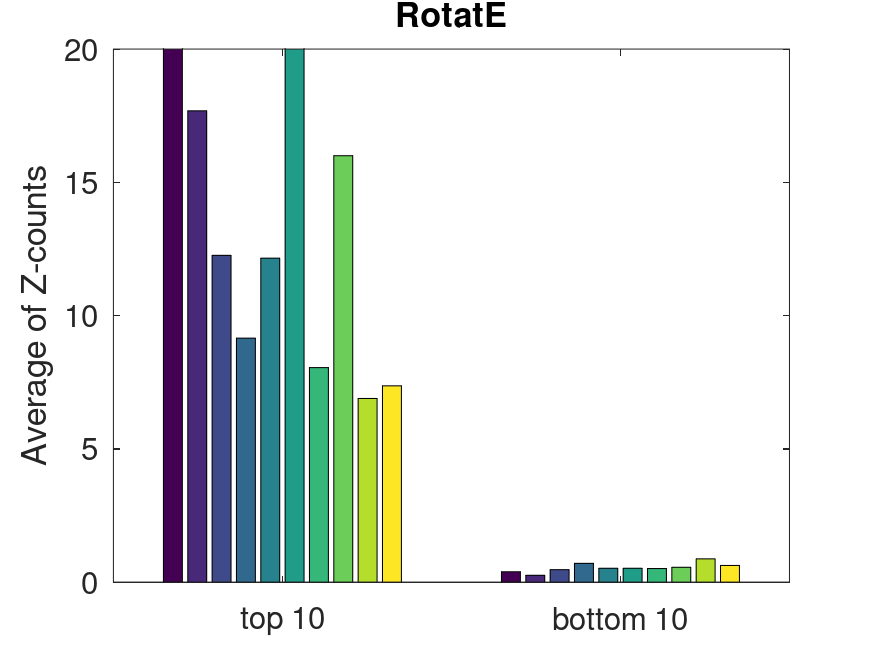}
        \includegraphics[width=0.3\linewidth, trim={0 50pt 0 10pt}]{./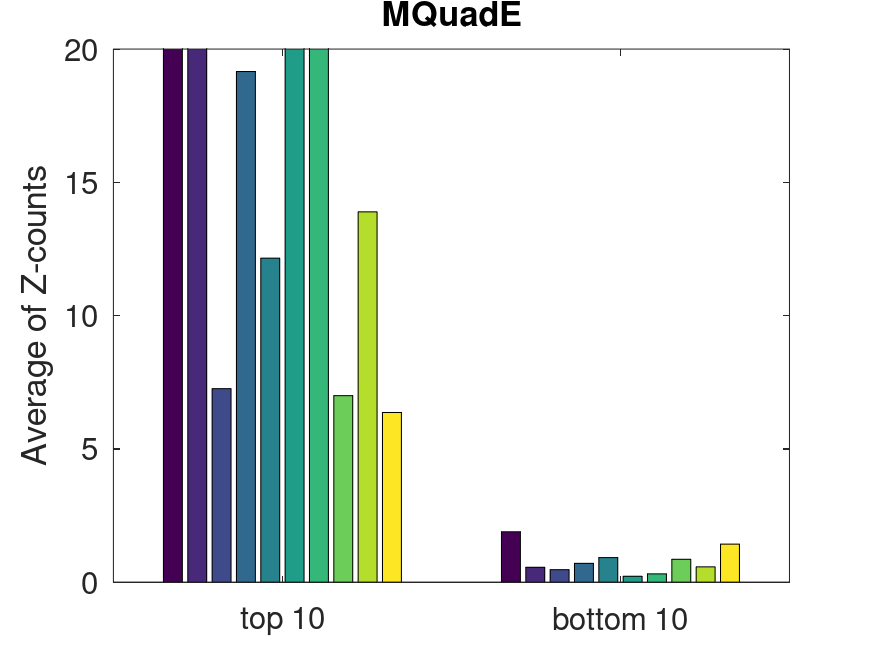}
    }
        \caption{The visualization of the Z-counts of TrasnE, RotatE, and MQuadE for the FB15k datasets. Top-10 means the rank is smaller than 10 during evaluation and bottom-10 means the opposite. The y-axis is the average of Z-counts which only shows the interval (0, 20).} \label{fig: visualization} 
    \end{figure*}


{
\begin{definition}[Z-path, Z-counts]\label{def: Z}
    For each triplet $(h, r, t)$ in a knowledge graph $\mathcal{H}$, we define a Z-path and the corresponding Z-counts as follows:
    A Z-path exists if there are intermediate entities $e_1$ and $e_2$ such that the following triplets are present in the knowledge graph: $(h, r, e_1)$, $(e_1, r, e_2)$, and $(e_2, r, t)$. The Z-counts for a triplet $(h, r, t)$ is the total number of such Z-paths.
\end{definition}
}

{
\begin{definition}[Separable function]\label{def: separable}
    A function $F(x_1, x_2 \ldots, x_n)$, defined on a region $D \subset \mathbb{R}^n$, is said to be separable with respect to the variable $x_1$ if it can be written as a summation of two functions, defined on $D$, with one function $G$ depending only on $x_1$, whereas the other function $H$ is independent of $x_1$, i.e.
    \begin{align}
        F(x_1, x_2 \ldots, x_n) = G(x_1) + H(x_2 \ldots, x_n)
    \end{align}
\end{definition}
}

\begin{proposition}\label{proposition}
Given a KGE method with a score function $\mathsf{r}(\mathbf{h},\mathbf{t})$ which is separable respect to $\mathbf{h}$ and $\mathbf{t}$, we have $\mathsf{r}(\mathbf{h},\mathbf{t}) = 0$ if there exists Z-path between $\mathbf{h}$ and $\mathbf{t}$.
\end{proposition}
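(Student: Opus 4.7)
The plan is to turn the hypothesis into three scalar equations via the score function, decompose them using separability, and then finish with a one-line algebraic manipulation.

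First I would invoke \Cref{def: separable} to write $\mathsf{r}(\mathbf{h},\mathbf{t}) = G(\mathbf{h}) + H(\mathbf{t})$ for suitable single-argument functions $G$ and $H$: separability in $\mathbf{h}$ gives a decomposition whose $\mathbf{h}$-free summand must then depend only on $\mathbf{t}$ (by separability in $\mathbf{t}$), yielding this clean additive form. Next, I would apply \Cref{def:score}: because a Z-path between $h$ and $t$ exists, each of its constituent triples holds in the knowledge graph, so the corresponding scores vanish. Reading off the Z-shaped configuration depicted in \Cref{fig: z-counts}, the three vanishing conditions are $\mathsf{r}(\mathbf{h},\mathbf{e}_1) = 0$, $\mathsf{r}(\mathbf{e}_2,\mathbf{e}_1) = 0$, and $\mathsf{r}(\mathbf{e}_2,\mathbf{t}) = 0$.

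Substituting the separable form into these three equalities produces the linear system
\begin{align*}
G(\mathbf{h}) + H(\mathbf{e}_1) &= 0, \\
G(\mathbf{e}_2) + H(\mathbf{e}_1) &= 0, \\
G(\mathbf{e}_2) + H(\mathbf{t}) &= 0.
\end{align*}
Subtracting the second equation from the first gives $G(\mathbf{h}) = G(\mathbf{e}_2)$, and plugging this into the third yields $G(\mathbf{h}) + H(\mathbf{t}) = 0$, i.e.\ $\mathsf{r}(\mathbf{h},\mathbf{t}) = 0$, as claimed.

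The main obstacle is not the algebra but the interpretation of the Z-path itself. The formal \Cref{def: Z} writes the three triples as a directed chain $(h,r,e_1), (e_1,r,e_2), (e_2,r,t)$, whereas the actual Z-shape drawn in \Cref{fig: z-counts} (and the one the telescoping above relies on) requires the middle edge to be reversed, giving $(h,r,e_1), (e_2,r,e_1), (e_2,r,t)$. Under the chain reading, separability only yields $G(\mathbf{h}) = -H(\mathbf{e}_1)$, $G(\mathbf{e}_1) = -H(\mathbf{e}_2)$, and $H(\mathbf{t}) = -G(\mathbf{e}_2)$, which cannot be combined into $G(\mathbf{h}) + H(\mathbf{t}) = 0$ without further structural constraints on $G$ and $H$. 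I would therefore carry out the proof under the Z-shape interpretation consistent with the figure, and flag the chain-versus-Z discrepancy in the definition as a point to reconcile.
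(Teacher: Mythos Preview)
Your proposal is correct and mirrors the paper's own argument almost exactly: the paper also decomposes $\mathsf{r}(\mathbf{h},\mathbf{t})=\mathsf{r}_1(\mathbf{h})+\mathsf{r}_2(\mathbf{t})$, writes down the same three vanishing conditions under the reversed-middle-edge Z-shape reading, and telescopes them (via the identity $[\text{first}]-[\text{second}]+[\text{third}]$, which is equivalent to your subtract-then-substitute step). Your flag about the chain-versus-Z discrepancy in \Cref{def: Z} is well placed; the paper's proof silently adopts the Z-shape orientation from \Cref{fig: z-counts} rather than the chained triples written in the definition.
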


\begin{proof}
    If $\mathsf{r}(\mathbf{h},\mathbf{t})$ is separable, we have $\mathsf{r}(\mathbf{h},\mathbf{t}) = \mathsf{r_1}(\mathbf{h}) + \mathsf{r_2}(\mathbf{t})$ for some $\mathsf{r_1}(\cdot)$ and $\mathsf{r_2}(\cdot)$. For the four entities $h,e_2,e_3,t$, assuming that $(h,r,e_2)$, $(e_3,r,e_2)$, $(e_3,r,t)$ hold, now we prove that it is very likely $(h,r,t)$ also holds. 
    
    Let $\mathbf{h},\mathbf{e_2},\mathbf{e_3},\mathbf{t}$ be the embeddings of $h,e_2,e_3,t$.
    Since $(h,r,e_2)$ holds, we have
    \begin{equation*}
    \mathsf{r}(\mathbf{h},\mathbf{e_2})=\mathsf{r_1}
    (\mathbf{h})+\mathsf{r_2}(\mathbf{e_2}) \approx 0,
    \end{equation*}
    and similarly,
    \begin{equation*}
    \mathsf{r_1}(\mathbf{e_3}) + \mathsf{r_2}(\mathbf{e_2}) \approx 0,\qquad
    \mathsf{r_1}(\mathbf{e_3}) + \mathsf{r_2}(\mathbf{t}) \approx 0.
    \end{equation*}
    Then it follows that 
    \begin{align*}
    \mathsf{r}(\mathbf{h},\mathbf{t}) &= 
    \mathsf{r_1}(\mathbf{h}) + \mathsf{r_2}(\mathbf{t}) \\
     &= [\mathsf{r_1}(\mathbf{h}) + \mathsf{r_2}(\mathbf{e_2})] - [\mathsf{r_1}(\mathbf{e_3})
     + \mathsf{r_2}(\mathbf{e_2})] +[\mathsf{r_1}(\mathbf{e_3}) + \mathsf{r_2}(\mathbf{t})]
      \\ &\approx 0
    \end{align*}
        i.e., $(h,r,t)$ holds which completes the whole proof.
\end{proof}

{
\begin{remark}
    It is important to note that the separability condition (refer to \Cref{def: separable}) for score functions is relatively easy to satisfy and not overly restrictive. Specifically, we can rewrite the score functions of TransE and RotatE to conform to the separable condition. For instance, $s_{\text{TransE}}(h,r,t) = h + r - t = (h + r) - t = f_1(h,r) - g(t,r)$ and $s_{\text{RotatE}}(h,r,t) = h \circ r - t = (h \circ r) - t = f_2(h,r) - g(t,r)$, where $f_1(h,r) = h + r$, $f_2(h,r) = h \circ r$, and $g(t,r) = t$. We provide a list of typical methods with separable score functions in \Cref{table: KGEmethods}. Additionally, Knowledge Graph Embedding (KGE) models with bilinear score functions can satisfy the separable condition if $2\mathbf{h}^T \mathbf{R} \mathbf{t} = -|\mathbf{R} \mathbf{t} - \mathbf{h}|^2 + |\mathbf{h}|^2 + |\mathbf{R} \mathbf{t}|^2$. Consequently, several KGE models, including DisMult, ComplEx, DihEdral, QuatE, SEEK, and Tucker, also meet this condition.
\end{remark}
}

{
\begin{remark}
    For any entities $h$ and $t$, the larger the Z-counts between $h$ and $t$ is, the more likely that the entities $h$ and $t$ are connected by this KGE method. The proof shows that if there exists at least one Z-path (i.e., Z-counts $>0$), then the score function $\mathsf{r}(\mathbf{h},\mathbf{t})$ is likely close to zero, indicating a strong likelihood that $(\mathbf{h},\mathbf{r},\mathbf{t})$ is a valid triplet. This aligns with the statement because a higher Z-count means there are more Z-paths, which collectively increase the likelihood that $\mathsf{r}(\mathbf{h},\mathbf{t}) \approx 0$.
\end{remark}
}

Now we use the conclusion of \Cref{proposition} and experimental numerical analysis to explain why we can use the Z-counts to measure the difficulty of a triplet. By \Cref{proposition}, given a KGE method with a separable score function, the more Z-path between entity $h$ and entity $t$ is, the more likely they are connected by this KGE method. Under this observation, we doubt whether the Z-counts plays an critical role in the inference of this KGE method. In other words, we want to know whether this KGE model would predict a triplet as a true fact as long as its Z-counts is large, regardless of whether it is reasonable. If this is the case, the triples with large Z-counts will not necessarily to provide recognizable characteristics. On the contrary, the triples with small Z-counts will be more likely to provide more useful features to identify a precise decision boundary.

\begin{table}[t]
    \small
    \centering
        \begin{tabular}{lcc} 
        \toprule
        Model & Score function $f(h,r,t)$ & \# Representation of param    
        \\
        \midrule
        DisMult & $-\left\langle \mathbf{h},\mathbf{r},\mathbf{t}  \right\rangle$ 
        & $\mathbf{h},\mathbf{r},\mathbf{t} \in \mathbb{R}^k$
        \\
        
        ConvE  & $g( \operatorname{vec}(g( [\mathbf{h},\mathbf{r}]*\mathbf{w} ))W )\mathbf{t}$  
        & $\mathbf{h},\mathbf{r},\mathbf{t} \in \mathbb{R}^k, \mathbf{w}\in \mathbb{R}^{m_1}, W \in \mathbb{R}^{m_2} $
        \\
        TransE & $\Vert \mathbf{h}+\mathbf{r}-\mathbf{t} \Vert$ 
        & $\mathbf{h},\mathbf{r},\mathbf{t} \in \mathbb{R}^k$
        \\
        RotatE & $\Vert \mathbf{h} \circ \mathbf{r}-\mathbf{t} \Vert$ 
        & $\mathbf{h},\mathbf{r},\mathbf{t} \in \mathbb{C}^k, \Vert \mathbf{r}_i \Vert=1$
        \\
        MQuadE & $\Vert  \mathbf{H}\mathbf{R}-\widehat{\mathbf{R}}\mathbf{T}      \Vert  $ 
        & \thead{$\mathbf{H},\mathbf{T},\mathbf{R},\mathbf{\widehat{R}} \in \mathbb{R}^{p \times p},$\\ $\mathbf{H}, \mathbf{T} \text{ are symmetric }$}
        \\  
        \bottomrule
         \end{tabular}
         \smallskip
         \caption{ Some classical models with separable score function and their parameters.} \label{table: KGEmethods}
    \end{table}

Next, we employ the experimental results to prsent a positive answer to the above doubt. 
For each dataset including B15k\citep{bollacker2008freebase}, FB15k-237 \citep{toutanova2015observed}, WN18 \citep{bordes2013translating}, and WN18RR \citep{dettmers2018convolutional}, we counted the number of Z-counts in the training set. \Cref{table:Statistics} shows the exact numbers. We can observe that the samples in FB15k and FB15k-237 datasets have more Z-counts than those in WN18 and WN18RR.

\begin{table}[t]
\small
\begin{center}
\begin{tabular}{llll}
\toprule
Dataset & \#Training & \#Validation & \#Test \\
\midrule
FB15k & 483,142 & 50,000 & 59,071 \\
FB15k-237 & 272,115 & 17,535 & 20,466 \\
WN18 & 141,442 &5,000 &5,000 \\
WN18RR & 86,835 &3,034 &3,134 \\
Yago3-10 & 1,079,040 & 5,000 & 5,000 \\
\bottomrule
\end{tabular}
\smallskip
\caption{Statistics information of benchmark datasets.}
\label{table:datasets}
\end{center}
\end{table}

\begin{table}[t]
\begin{center}
\begin{tabular}{lcccc}
\toprule
Dataset & \# Sample (Z-counts$>0$) &  Avg & Max & Percentage \\
\midrule
FB15k & 292,908 & 44 & 1381 & 0.606 \\
FB15k-237 & 167,461 & 49 & 1246 & 0.615  \\
WN18 & 16,414 & - & 107 & 0.083   \\
WN18RR & 10,033 & - & 107 & 0.116  \\
Yago3-10 & 831,939 & 42 & 1268 & 0.771\\
\bottomrule
\end{tabular}
\smallskip
\caption{Statistics information of Z-counts in each dataset, including the number of samples with non-zero Z-counts, the average of Z-counts, the max of Z-counts, and the percentage of samples with non-zero Z-counts.}
\label{table:Statistics}
\end{center}
\end{table}

We use the original TransE \citep{bordes2013translating}, RotatE \citep{sun2019rotate} and MQuadE \citep{yu2021mquade} to evaluate the testing set of the FB15k dataset. We calculate the average number of Z-counts of the top-10 and bottom-10 samples. In \Cref{fig: visualization}, we can see that the Z-counts in the top 10 are remarkably more than those in the bottom 10. It is obvious that the uneven distributions are illogical, which implies that the Z-paths should provide important information in the inference of this KGE method.

Under the above analysis, we will view a triplet with small Z-counts as an easy one and view a triplet with a big Z-count as a difficult one.



\begin{table}[t]
\begin{center}
\begin{small}
\begin{tabular}{lrr}
\toprule
Dataset & \#Entity & \# Relation \\
\midrule
FB15k & 14,951 & 1,345 \\
FB15k-237 & 14,541 & 237 \\
WN18 &  40,943 & 18 \\
WN18RR &  40,943  & 11 \\
Yago3-10 & 123,182 & 37\\
\bottomrule
\end{tabular}
\smallskip
\caption{Statistics information of benchmark datasets.}
\label{table: datasets}
\end{small}
\end{center}
\end{table}

\begin{algorithm}[!ht]  
    \caption{CL4KGE framework}\label{alg:clkg}
        \KwIn{A kowledge base $\mathcal{H} = (\mathcal{E}_{\mathcal{H}}, \mathcal{R}_{\mathcal{H}})$, the max iteration number $\texttt{maxIter}$, the pacing function $g(t)$.}
        \KwOut{The embeddings of entities and relations $(\mathcal{E}_{\mathcal{H}}, \mathcal{R}_{\mathcal{H}})$.}
        Initialize the embeddings for $(\mathcal{E}_{\mathcal{H}}, \mathcal{R}_{\mathcal{H}})$\;
        \For{$u \in \mathbf{T}_{\text{train}}$}
            {Compute the $\texttt{Difficulty}(u)$ using \Cref{def: Z}}
        Sort 
        training set according to \texttt{Difficulty} in ascending order\;
        Let $t=1$\;
        \While{not converge}{
            \If{$t < \texttt{maxIter}$}
            {
                $\lambda \gets g(t)$\;
                Batch $\gets \mathbf{T}_{\text{train}}[1:\lambda \times l]$\;
                UPDATE the embeddings of $(\mathcal{E}_{\mathcal{H}}, \mathcal{R}_{\mathcal{H}})$ via X\;
                $t \gets t+1$\;
                }
        }
\end{algorithm}

\subsection{Training scheduler}\label{sec: training scheduler}

After the difficulty measure module which measures the difficulty of each sample in training set $\mathbf{T}_{\text{train}}$, we demonstrate a curriculum learning framework to train a better KGE method. 

\subsubsection{Pacing Functions}

\begin{table}[t]
\small
\begin{center}
\begin{tabular}{lc}
\toprule
Dataset  & \thead{Initial percentage \\ $p_0$ = 1 - Percentage (in \Cref{table:Statistics})}
\\
\midrule
FB15k & 0.396 \\
FB15k-237 & 0.385  \\
WN18  &  0.917 \\
WN18RR & 0.884  \\
Yago3-10 & 0.229\\
\bottomrule
\end{tabular}
\smallskip
\caption{The initial percentage for each dataset.}
\label{table: initial}
\end{center}
\smallskip
\end{table}

\begin{table}[t]
\small
\begin{center}
\begin{tabular}{l l}
\toprule
Name & Equations \\ 
\midrule
Linear &  $p_0 + \frac{1}{p_0} \times \min (1, \lambda_{0}+\frac{1-\lambda_{0}}{T_{\text {grow }}} \cdot t)$ \\ 
Root &  $p_0 + \frac{1}{p_0} \times \min (1, \sqrt{\frac{1-\lambda_{0}^{2}}{T_{\text {grow }}} \cdot t+\lambda_{0}^{2}})$ \\ 
Root-p &   $p_0 + \frac{1}{p_0} \times \min (1, \sqrt{\frac{1-\lambda_{0}^{p}}{T_{\text {grow }}} \cdot t+\lambda_{0}^{p}})$ \\
Geometric &$ p_0 + \frac{1}{p_0} \times \min (1,2(\frac{\log _{2} 1-\log _{2} \lambda_{0}}{T_{\text {grow }}} \cdot t +\log _{2} \lambda_{0}))$ \\ 
\bottomrule
\end{tabular}
\smallskip
\caption{The equation of our pacing functions.}
\label{table: pacing function}
\end{center}
\end{table}

We propose a framework to generate the easy-to-difficult curriculum based on a continuous training scheduler. We follow the previous work \citep{wei2022clnode} to design these functions which can be viewed as intensity functions $\lambda(\cdot)$ to map the training epoch to a scale between 0 and 1, i.e. $\lambda(t): N \longrightarrow (0,1]$. According to the \Cref{def: cl}, we choose the functions as our pacing function summarized in \Cref{table: pacing function}.



\subsubsection{Pseudo Code and Complexity Analysis}

In this subsection, we first show the pseudo-code shown in \Cref{alg:clkg}. The process of CL4KGE is detailed in \Cref{alg:clkg}. Lines 1-2 describe the input and output of the algorithm. Lines 4–6 describe the Z-counts based difficulty measurer and lines 7–15 describe the process of training the backbone KGE methods with a curriculum framework. As the pseudo-code shows, CL4KGE is easy to implement and can be directly plugged into any backbone KGE method, as it only changes the training set in each training epoch (lines 11–14).




For the complexity analysis of \Cref{alg:clkg}, we mainly focus on the pre-processing of it --- the calculations of Z-counts of training samples. For a given knowledge graph $\mathcal{H}$, the space complexity is $\mathcal{O}(|\mathbf{T}_{\text{train}}|)$ and the time complexity slightly less than $\mathcal{O}(|\mathbf{T}_{\text{train}}| \times \mathcal{E}_{\mathcal{H}}^2)$. 
Since this process can obviously be handled in parallel, the time complexity could be $\mathcal{O}(\mathcal{E}_{\mathcal{H}}^2)$. Even for the large-scale benchmark dataset in \Cref{table: datasets}, we only spend just a few hours on a laptop computing Z-counts and sorting them. As for the backbone (lines 11–14 in \Cref{alg:clkg}), the complexity is the same as the baselines. Our approach don't increase the complexity of the algorithm and is even somehow improving the training process.


\section{Experiments}\label{sec: experiments}

In this section, we conduct our experiments to demonstrate the efficiency of CL4KGE. First, we introduce the basic benchmark datasets we use in our work in \Cref{exp:dataset}. Then we present a brief introduction of our compared baselines and the experimental setup in \Cref{exp:baseline} and \Cref{exp:setup}. Lastly, we will give an overview of our experimental results and discussions in the next section.

\subsection{Datasets}\label{exp:dataset}

We select the benchmark datasets used in the knowledge graph domain. The following are some descriptions of datasets with statistical information summarized in \Cref{table: datasets}. These datasets are divided into train set, validation set, and test set according to the 8:1:1 ratio.


\paragraph{\textbf{FB15k-237}} FB15k-237 \citep{toutanova2015observed} is a subset of the Freebase \citep{bollacker2008freebase} knowledge graph which contains 237 relations. The FB15k \citep{bordes2013translating} dataset, a subset of Free-base, was used to build the dataset by \citep{toutanova2015observed} to study the combined embedding of text and knowledge networks. FB15k-237 is more challenging than the FB15k dataset because FB15k-237 strips out the inverse relations.


\paragraph{\textbf{WN18}}  WN18 is a subset of the WordNet \citep{miller1995wordnet}, a lexical database for the English language that groups synonymous words into synsets. WN18 contains relations between words such as \emph{hypernym} and \emph{similar\_to}.

\paragraph{\textbf{WN18RR}} WN18RR \citep{dettmers2018convolutional} is a subset of WN18 that removes symmetry/asymmetry and inverse relations to resolve the test set leakage problem. WN18RR is suitable for the examination of relation composition modeling ability.


\paragraph{\textbf{Countries}} Countries dataset \citep{bouchard2015approximate} consists of 244 countries, 22 subregions (e.g., Southern Africa, Western Europe), and 5 regions (e.g., Africa, Americas). Each country is located in exactly one region and subregion, each subregion is located in exactly one region, and each country can have a number of other countries as neighbors.


\begin{table}[t]
\small
    \centering
    \begin{tabular}{|c|c|c|c|}
         \hline 
         \multirow{2}{*}{{Model}}& \multicolumn{3}{c|}{\text { Countries (AUC-PR) }} \\
\cline { 2 - 4 } & \text { S1 } & \text { S2 } & \text { S3 }  \\
\hline \text { DisMult } & 1.00 & 0.72 & 0.52 \\
\text { ComplEx } & 0.97 & 0.57 & 0.43 \\
\text { ConvE } & 1.00 & 0.99 & 0.86 \\
\text { RotatE } & 1.00  & 1.00  & 0.95  \\
\hline
\text { DisMult\_CL4KGE } & 1.00 & \textbf{0.82} & \textbf{0.71} \\
\text { ComplEx\_CL4KGE } & \textbf{0.99} & \textbf{0.62} & \textbf{0.45} \\
\text { ConvE\_CL4KGE } & 1.00 & \textbf{1.00} & \textbf{0.95} \\
\text { RotatE\_CL4KGE } & 1.00 & 1.00 & \textbf{0.99} \\
\hline
    \end{tabular}
    \smallskip
    \caption{Results on the Countries datasets. Other results are from ConvE and RotatE.}
    \label{tab: Countries}
\end{table}

\subsection{Baselines}\label{exp:baseline}

{Within the domain of knowledge graphs, a diverse array of methodologies has been developed, including DistMult \citep{yang2014embedding}, TransE \citep{bordes2013translating}, DihEdral \citep{yang2014embedding}, QuatE \citep{zhang2019quaternion}, TuckER, RotatE \citep{sun2019rotate}, ComplEx \citep{trouillon2016complex}, MQuadE \citep{yu2021mquade}, DensE \citep{lu2020dense}, and HousE \citep{li2022house}. We employ these well-established methods as Knowledge Graph Embedding (KGE) backbones to evaluate whether our proposed framework enhances their performance.}

\subsection{Experimental Setup}\label{exp:setup}

\paragraph{Loss function} For the convenience of training, we use the same loss function and optimization method to learn the parameters. To learn the model parameters, we apply the self-adversarial loss function suggested in MQuadE \citep{yu2021mquade},
\begin{equation}\label{equ:loss}
    \mathcal{L} = -\log \sigma(\gamma-f(h,r,t))-\sum_{i=1}^K
    p_i \log\sigma(f(h_i^{\prime},r,t_i^{\prime})-\gamma)
\end{equation}
where $\gamma > 0$ is a pre-defined margin, $\sigma$ is the sigmoid activation function, $(h_i^{\prime},r,t_i^{\prime})$ is the $i$-th negative triple and $p_i$ is the weight of the responding sample $(h_i^{\prime},r,t_i^{\prime})$ as the following equation:
\begin{equation*}
    p_i = \frac{\exp \alpha(\gamma-f(h_i^{\prime},r,t_i^{\prime}))}{\sum^K_{k=1}\exp\alpha(\gamma-f(h_k^{\prime},r,t_k^{\prime}))}
\end{equation*}
where $\alpha \in [0,1]$ is the self-adversarial temperature.

For the baselines, we adopt the stochastic mini-batch optimization methods to minimize the above loss function (see \Cref{equ:loss}) and learn the models' parameters. Also, we use the adam optimizer for parameter learning.



\paragraph{Evaluation metrics} We use the mean reciprocal rank (MRR), mean rank (MR), Hit@1, Hit@3, and Hit@10 metrics for evaluation and convenience comparison. Also, we use the filtered setting following TransE \citep{bordes2013translating} which means the triplets that appear in the training and validation set are removed from the testing set.

\begin{table}[htbp]
\tiny
    \centering
         \begin{tabular}{lcccccccccc} 
        \toprule
        & \multicolumn{5}{c}{\textbf{Original setting}} & \multicolumn{5}{c}{\textbf{CL4KGE (ours)}}\\
        \midrule
        \multirow{2}{*}{\textbf{Models}}
        &\multirow{2}{*}{\textbf{MRR}($\uparrow$)} & \multirow{2}{*}{\textbf{MR}($\downarrow$)} & \multicolumn{3}{c}{\textbf{Hits@N}($\uparrow$)}
        &\multirow{2}{*}{\textbf{MRR}($\uparrow$)} & \multirow{2}{*}{\textbf{MR}($\downarrow$)} & \multicolumn{3}{c}{\textbf{Hits@N}($\uparrow$)}
        \\
         && & \textbf{1} & \textbf{3} & \textbf{10}
         && & \textbf{1} & \textbf{3} & \textbf{10}
         \\
        \midrule
        DisMult &0.241&254&0.155&0.263&0.419
        &0.246&204&0.159&0.268&0.429 (\textcolor{red}{$\uparrow$1.0\%})\\
        ComplEX &0.247&339&0.158&0.275&0.428
        &0.255&329&0.165&0.292&0.448 (\textcolor{red}{$\uparrow$2.0\%})\\
        DihEdral &0.320&-&0.230&0.353&0.502
        &0.335&-&0.241&0.373&0.534 (\textcolor{red}{$\uparrow$3.2\%})\\
        QuatE &0.311&176&0.221&0.342&0.495
        &0.322&172&0.224&0.355&0.500 (\textcolor{red}{$\uparrow$0.5\%})\\
        TuckER &0.353&162&0.260&0.387&0.536
        &0.357&152&0.280&0.399&0.556 (\textcolor{red}{$\uparrow$2.0\%})\\
        ConvE &0.325&224&0.237&0.356&0.501
        &0.330&202&0.256&0.366&0.509 (\textcolor{red}{$\uparrow$0.8\%})\\
        TransE &0.294 &357 & - & - &0.465
        &0.306 &247 & - & - &0.495 (\textcolor{red}{$\uparrow$3.0\%   })\\
        RotatE &0.336&177&0.241&0.373&0.530
        & 0.339 & 152 & 0.281 & 0.398 & 0.542 (\textcolor{red}{$\uparrow$1.2\%   })\\
        MQuadE &0.356 &174 &0.260 &0.392 &0.549 & {0.402} & 188 & {0.293} & {0.433} & {0.572} (\textcolor{red}{$\uparrow$2.3\%   }) \\
        DensE &0.351 &161 &0.256 &0.386 &0.544
        & 0.379 &181 &0.272 &0.400 &0.555 (\textcolor{red}{$\uparrow$1.1\%})\\
        HousE  &0.361 &153 &0.266 &0.399 &0.551
        &0.366 &149 &0.286 &0.411 &0.558 (\textcolor{red}{$\uparrow$0.7\%})\\
        \bottomrule
         \end{tabular}
         \smallskip
        \caption{ Results of link prediction on the FB15k-237 dataset.} \label{table:FB15k-237}
    \end{table}

\begin{table}[htbp]
\tiny
    \centering
         \begin{tabular}{lcccccccccc} 
        \toprule
        & \multicolumn{5}{c}{\textbf{Original setting}} & \multicolumn{5}{c}{\textbf{CL4KGE (ours)}}\\
        \midrule
        \multirow{2}{*}{\textbf{Models}}
        &\multirow{2}{*}{\textbf{MRR}($\uparrow$)} & \multirow{2}{*}{\textbf{MR}($\downarrow$)} & \multicolumn{3}{c}{\textbf{Hits@N}($\uparrow$)}
        &\multirow{2}{*}{\textbf{MRR}($\uparrow$)} & \multirow{2}{*}{\textbf{MR}($\downarrow$)} & \multicolumn{3}{c}{\textbf{Hits@N}($\uparrow$)}
        \\
         && & \textbf{1} & \textbf{3} & \textbf{10}
         && & \textbf{1} & \textbf{3} & \textbf{10}
         \\
        \midrule
        ComplEX &0.941&-&0.936&0.945&0.947
        & 0.939 & - & 0.939 & 0.946 & 0.951 (\textcolor{red}{$\uparrow$0.4\%})\\
        DihEdral &0.946&-&0.942&0.949&0.954
        & 0.950 & - & 0.944 & 0.952 & 0.955 (\textcolor{red}{$\uparrow$0.1\%})\\
        TuckER &0.953&-&0.949&0.955&0.958
        & 0.951 & - & 0.951 & 0.959 & 0.963 (\textcolor{red}{$\uparrow$0.5\%})\\
        ConvE &0.943&374 &0.935 &0.946 &0.956
        &0.945&325 &0.940 &0.949 &0.956 ($\uparrow$0.0\%)\\
        TransE &0.495&-&0.113&0.888&0.943
        &0.572&-&0.332&0.890&0.949 (\textcolor{red}{$\uparrow$0.6\%})\\
        RotatE &{0.949}&{09}&{0.944}&0.952&0.959 
         &{0.947}&{09}&{0.949}&0.960&0.966 (\textcolor{red}{$\uparrow$0.7\%  }) \\
        MQuadE &0.897&268&0.893&0.926&0.941
    &0.904&194&0.904&{0.955}&{0.964} (\textcolor{red}{$\uparrow$2.3\%  })\\
    HousE &0.954 &137 &0.948 &0.957 &0.964
    &0.955 &122 & 0.951 & 0.960 & 0.968 (\textcolor{red}{$\uparrow$0.4\%  })\\
        \bottomrule
         \end{tabular}
         \smallskip
        \caption{ Results of link prediction on the WN18 dataset.} \label{table:WN}
    \end{table}

\begin{table}[htbp]
    \tiny
    \centering
    \begin{tabular}{lcccccccccc} 
       \toprule
        & \multicolumn{5}{c}{\textbf{Original setting}} & \multicolumn{5}{c}{\textbf{CL4KGE (ours)}}\\
        \midrule
        \multirow{2}{*}{\textbf{Models}}
        &\multirow{2}{*}{\textbf{MRR}($\uparrow$)} & \multirow{2}{*}{\textbf{MR}($\downarrow$)} & \multicolumn{3}{c}{\textbf{Hits@N}($\uparrow$)}
        &\multirow{2}{*}{\textbf{MRR}($\uparrow$)} & \multirow{2}{*}{\textbf{MR}($\downarrow$)} & \multicolumn{3}{c}{\textbf{Hits@N}($\uparrow$)}
        \\
         && & \textbf{1} & \textbf{3} & \textbf{10}
         && & \textbf{1} & \textbf{3} & \textbf{10}
         \\
        \midrule
        DisMult &0.443 &4999 &0.403 &0.453 &0.534
                &0.446 &4260 &0.411 &0.466 &0.550 (\textcolor{red}{$\uparrow$1.6\%})\\
        ComplEX &0.472 &5702 & 0.432 & 0.488 &0.550
                &0.474 &4811 & 0.434 & 0.502 &0.555 (\textcolor{red}{$\uparrow$0.5\%})\\
        DihEdral &0.486&-&0.443&0.505&0.557
                &0.489&-&0.446&0.511&0.558 (\textcolor{red}{$\uparrow$0.1\%})\\
        TuckER  &0.470&-&0.443&0.482&0.526
                &0.474&-&0.448&0.502&0.557 (\textcolor{red}{$\uparrow$2.9\%})\\
        ConvE &0.430 &- &0.400&0.440  &0.520
        &0.413 &- &0.409&0.445  &0.531 (\textcolor{red}{$\uparrow$1.1\%  })\\
        TransE &0.226 &-&-&-&0.501
        &0.466&-&0.425&-&0.557 (\textcolor{red}{$\uparrow$5.6\%}) \\
        RotatE &0.476&{3340}&{0.428}&0.492&0.571
        &{0.472}&{1277}&{0.432}&{0.510}&{0.572} (\textcolor{red}{$\uparrow$0.1\%  })\\
        MQuadE &0.426&6114&0.427&0.462&0.564
        &{0.476}&{1989}&{0.412}&{0.505}&{0.584} (\textcolor{red}{$\uparrow$2.0\%})\\
        DensE &0.492 &2934 &0.443 &0.509 &0.586
        &0.495 &2002 &0.438 & 0.511 &0.588 (\textcolor{red}{$\uparrow$0.2\%})\\
        HousE &0.511 &1303 &0.465 &0.528 &0.602
        &0.515 &1299 & 0.470 & 0.532 & 0.609 (\textcolor{red}{$\uparrow$0.7\%})\\
        \bottomrule
         \end{tabular}
         \smallskip
        \caption{ Results of link prediction on the WN18RR dataset.} \label{table:WN18rr}
    \end{table}

\begin{table}[!ht]
    \centering
    \small
    \begin{tabular}{|c|c|c|c|c|c|}
         \hline & \multicolumn{3}{c|}{\text { Hit@10 }} 
         & \multirow{2}{*}{{Max}} & \multirow{2}{*}{{WO}}\\
\cline { 2 - 4 } & \text { linear } & \text { root } & \text { geometric } &&\\
\hline 
 { FB15K-237 } &0.482&\textbf{0.495}  & \textbf{0.495}   &\textbf{0.495} &0.465\\
 { WN18 } &0.944&0.946& \textbf{0.949}   & \textbf{0.949} & 0.943\\
 { WN18RR } &0.531&0.550& \textbf{0.557}   &\textbf{0.557} & 0.555\\
\hline
    \end{tabular}
    \smallskip
    \caption{Hit@10 (\%) of TransE on four benchmark datasets with different pacing functions. WO denotes the result without curriculum framework.}
    \label{tab: AblationTransE}
\end{table}

\begin{table}[!ht]
    \small
    \centering
    \begin{tabular}{|c|c|c|c|c|c|}
         \hline & \multicolumn{3}{c|}{\text { Hit@10 }}
         &
\multirow{2}{*}{{Max}}& \multirow{2}{*}{{WO}}\\
\cline { 2 - 4 } & \text { linear } & \text { root } & \text { geometric }  &&\\
\hline 
 { FB15K-237 } & 0.532 & 0.528 & \textbf{0.542}  &\textbf{0.542}&0.530\\
 { WN18 } & 0.960 & \textbf{0.966}  & \textbf{0.966}  & \textbf{0.966}&0.959\\
\text { WN18RR } & 0.570 & 0.568 & \textbf{0.572}   &\textbf{0.572}&0.571\\
\hline
    \end{tabular}
    \smallskip
    \caption{Hit@10 (\%) of RotatE on four benchmark datasets with different pacing functions. WO denotes the result without curriculum framework.}
    \label{tab: AblationRotatE}
\end{table}

\subsection{Results}\label{sec: results}

In this section, we provide the results and some discussions for \Cref{sec: experiments}.

\subsubsection{Link Prediction}


{
\Cref{table:FB15k-237}, \Cref{table:WN} and \Cref{table:WN18rr} illustrate the performance improvements of existing state-of-the-art KGE methods when integrated with the CL4KGE framework across different datasets (FB15k-237, WN18, and WN18RR). These tables collectively demonstrate the effectiveness of the curriculum learning method in enhancing various KGE models. In \Cref{table:FB15k-237}, the results on the FB15k-237 dataset show significant improvements in performance metrics (MRR, MR, Hits@1, Hits@3, and Hits@10) for models incorporating the CL4KGE framework. For instance, the Hits@10 score for the TransE model increased from 0.465 to 0.495, a 3.0\% improvement. All models exhibited improved MRR scores, such as the Dihedral model's MRR rising from 0.320 to 0.335, while MR scores generally decreased, indicating better performance. \Cref{table:WN} evaluates the same models on the WN18 dataset, where similar improvements are evident. Most models saw an increase in Hits@10, with RotatE's score rising from 0.960 to 0.966, a 0.6\% enhancement. MRR and MR improvements were consistent, with ConvE’s MRR increasing from 0.943 to 0.945. \Cref{table:WN18rr} presents the results on the more challenging WN18RR dataset, where models like TransE showed substantial gains, with Hits@10 increasing from 0.501 to 0.557, a 5.6\% rise. The MRR improvements were significant for some models, such as ComplEx's MRR increasing from 0.472 to 0.474, and Dihedral’s rising from 0.486 to 0.489. These consistent improvements across different datasets and models underscore the robustness and generalizability of the CL4KGE framework. By using Z-counts to measure triplet difficulty and employing a structured training scheduler, CL4KGE effectively enhances the training process, leading to improved performance in link prediction tasks.
}

\subsubsection{Inferring Relation Patterns}

In this subsection, we show the results on Countries datasets including countries\_S1, countries\_S2, and countries\_S3. {The queries in Countries are of the type \texttt{locatedIn}(c, ?), and the answer is one of the five regions, unlike link prediction task on knowledge graph. There are three jobs in the Countries dataset, each of which demands inferring an increasingly complex and lengthy composition pattern.} In \Cref{tab: Countries}, we summarize the results with the AUC-PR metric, which is commonly used in the literature \citep{sun2019rotate}.

\subsubsection{Ablation Study}

In this subsection, we conduct ablation studies to demonstrate the choice of pacing functions in CL4KGE. In \Cref{tab: AblationTransE} and \Cref{tab: AblationRotatE}, we evaluate the sensitivity of CL4KGE to three pacing functions: linear, root, and geometric. We compare the Hit@10 metric on the link prediction task utilizing these three pacing functions while employing TransE \citep{bordes2013translating} as the backbone of the KGE. On the most of benchmark datasets, we discover a slight advantage for the geometric pacing function.

\section{Conclusion}\label{sec: conclusion}

{In this study, we propose a curriculum framework named \textbf{CL4KGE} for knowledge graph embedding, underpinned by theoretical foundations. Our approach is inspired by the Z-shape phenomenon observed in knowledge graphs, leading us to develop a novel difficulty metric termed Z-counts. Utilizing Z-counts, we have designed a data-driven training scheduler. We conducted experiments on multiple knowledge graph benchmarks, and the results unequivocally demonstrate the efficacy of our framework.}

\bibliography{neurips}
\bibliographystyle{neurips}


\end{document}